\newtheorem*{theorem}{Theorem}
\title{What Happens to a Dataset Transformed by a Projection-based Concept Removal Method?}
\name{Richard Johansson} 
\address{Department of Computer Science and Engineering \\
         University of Gothenburg and Chalmers University of Technology\\
         richard.johansson@gu.se\\}
\abstract{
We investigate the behavior of methods that use linear projections to
remove information about a concept from a language representation, and
we consider the question of what happens to a dataset transformed by
such a method.
%
A theoretical analysis and experiments on real-world and synthetic data show that these methods inject strong statistical dependencies into the transformed datasets.
After applying such a method, the representation space is highly structured: in the transformed space, an instance tends to be located near instances of the opposite label. As a consequence, the original labeling can in some cases be reconstructed by applying an anti-clustering method.
\\ \newline \Keywords{language representation, concept removal, projection, method analysis} }
\begin{document}

\maketitleabstract

\section{Introduction}


While most research in representation learning for NLP focuses on what
information \emph{is} encoded in a representation, in several
scenarios it is important to be able to control what \emph{is not}
encoded.
Most of the discussion in the NLP community has focused on demographic
attributes \cite{bolukbasi2016}.
Another area of application is in domain adaptation: intuitively, if 
representations are uninformative about which domain a data point was
sampled from, learned predictors based on those representations should
generalize more robustly across domains 
\cite{ganin2015}.


A wide range of methods have been developed to learn a transformation
of representations to try to 
enforce invariance with respect to a given concept while training machine learning models.
While early approaches were mainly based on adversarial
training \cite{ganin2015}, a number of recent methods have
used \emph{linear projections} for concept removal.
For instance, the Iterative Nullspace
Projection (INLP) method \cite{ravfogel2020} 
projects into a nullspace orthogonal to a set of linear models trained
to predict the concept we wish to remove.
Compared to adversarial methods, projection-based methods are
mathematically more stable, more efficient, easier to implement,
and have performed better in comparative evaluations.

We focus here on the use case where we want to transform the
representations in a given \emph{dataset} to make a given concept impossible
to recover.
For instance, for a given set of word embeddings, we may want
to create a transformed set where the gender variable is
impossible to predict, and then distribute this transformed set to the public.
Other application areas include those where we want to carry out a statistical analysis on a dataset and ensure that some concept does not influence the analysis; for instance, \newcite{daoud2022} discuss the challenges to text-based causal inference methods caused by \emph{treatment leakage}, that is when texts are contaminated by the treatment variable. 
Naively, a user could think that the direct application of projection-based
concept removal would lead to a processed dataset
resembling one sampled from a distribution where the concept is
statistically independent of the representations: projection-based methods are claimed to ``remove the linear information'' about the undesired concept.
To what extent is it actually true that the
information about the concept is removed from the dataset?

In this paper, we investigate properties of datasets where
a projection-based concept removal method has been applied to a dataset as a whole.
%
%
The main takeaway is that the transformed representation space is
highly structured: the 
assumption of independent and identically distributed (i.i.d.) instances
does not hold
after applying the method.
%
Instead of resulting in statistical independence between 
the representation and the concept, we show that the concept is reflected in
dependencies between rows (instances) in the transformed datasets.
This injected row-wise dependence is present even in cases where there was no statistical dependence between the representations and the concept in the first place.
We discuss the technical reasons for
why this is the case, and then carry out a series of experiments to
investigate the consequences of this observation.
Our findings include the following:
\begin{itemize}\addtolength{\itemsep}{-0.5\baselineskip}
\item Cross-validation accuracies for predicting the removed concept in transformed datasets are lower than chance.
\item The distribution of prediction probabilities for cross-validated classifiers trained on projected representations are significantly different from
those trained on i.i.d. data.
\item In the transformed dataset, instances tend to be near
those of the opposite category.
\item The original labels can sometimes be decoded from the
transformed dataset by applying anti-clustering methods.
\end{itemize}
We finally discuss the implications of these findings for
practitioners using projection-based concept removal methods to process datasets.

\section{Concept Removal Methods}


Most early work on 
methods that remove a concept was based on adversarial methods
originally developed for learning domain-invariant
representations \cite{ganin2015}. Adversarial methods have, among other use cases, been
applied for the removal of demographic attributes \cite{raff2018,li2018,barrett2019}.

Adversarial training is often unstable in practice and can be
difficult to train because of the minimax objective.
A mathematically more straightforward approach is to use a
linear \emph{projection}, originally introduced by \newcite{xu2017}.
%
%
Although recent progress in NLP
highlights the importance of representations computed using nonlinear
functions, it seems that in practice linear projections work well for concept
removal even when nonlinear predictors are used.
\newcite{ravfogel2020} proposed the Iterative Nullspace Projection (INLP)
method that is one of the methods we consider in this paper.
INLP iteratively trains a linear classifier
to predict the concept, and then projects into the subspace orthogonal
to the normal vector of the classifier's separating hyperplane.

More recently, a range of methods intended to improve over INLP have
been developed.
\newcite{ravfogel2022} unified the projection-based and
adversarial families, and presented a method called R-LACE that finds
a projection adversarially.
\newcite{belrose2023}
presented a theoretical formalization of conditions for linear
guardedness and an approach to finding optimal projections.
%
\newcite{haghighatkhah2022} described two variants of \emph{mean projection} (MP), where the difference vector between the class centroids
defines the projection, and they argued that this method is more
effective and less intrusive than INLP.

\section{Theoretical Analysis}
\label{sec:structure}

We investigate the structure of datasets where projection-based
concept removal methods have been applied,
and we are interested in how such datasets differ from 
a normal dataset where the representation $X$ is
statistically independent of the concept $Y$.
%
%
%
%
%
In this section, we take an analytical perspective and explain theoretically
the structured arrangement of data points
in the transformed space. 
%
In the next section, we show the results of empirical investigations
complementing the theoretical analysis.

Our main result shows that instances after projection have an
adversarial arrangement where each instance tends to be located close
to those of the opposite label.
For simplicity of analysis, we limit this analysis to MP
\cite{haghighatkhah2022}, which is equivalent to 
applying INLP with a nearest centroid classifier to find the
projection vector.
A full analysis of the general case is beyond the scope of
this work because it depends on the data-generating distribution as
well as the choice of method used to define the projections.

\begin{theorem}
Let $X \in \mathbb{R}^{m,n}$ be a feature matrix and $Y \in \{0,1\}^m$
the class labels.
MP is then applied to $X$ with respect to $Y$ and we refer to the result as
$X_{\mathrm{MP}}$.
%
We carry out a leave-one-out cross-validation in the transformed
dataset where we set a single
instance $x_i, y_i$ aside and train a nearest-centroid classifier on
the remaining data.
In this case, $x_i$ cannot be classified with a positive margin by
this classifier: that is, $x_i$ is either misclassified or exactly on
the classifier's decision boundary.
\end{theorem}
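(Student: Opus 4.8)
The plan is to exploit the defining property of mean projection: since MP subtracts from every row its component along the between-class centroid difference $d = \mu_1 - \mu_0$, and since an orthogonal projection is a linear map that commutes with averaging, the two class centroids of the \emph{transformed} dataset must coincide. Concretely, writing $P$ for the projection and noting $Pd = 0$, the transformed class means satisfy $P\mu_1 - P\mu_0 = P(\mu_1 - \mu_0) = Pd = 0$, so there is a single common centroid $c$ shared by both classes in $X_{\mathrm{MP}}$. This collapse of the centroids is what makes a positive margin impossible, and establishing it cleanly (in particular, justifying the commutation of projection and averaging) is the first step I would carry out. Note that this uses no assumption on the data-generating distribution; the whole argument is purely algebraic.

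Next I would analyze the leave-one-out step directly on the fixed matrix $X_{\mathrm{MP}}$. Assume without loss of generality that the held-out transformed instance $x_i$ has label $y_i = 1$, and let $m_1 \ge 2$ be the number of class-$1$ instances. Removing $x_i$ leaves every class-$0$ point untouched, so the opposing prototype is still $c$ and the distance to it is $\|x_i - c\|$. The class-$1$ prototype, however, is recomputed from the remaining $m_1 - 1$ points as $\frac{m_1 c - x_i}{m_1 - 1}$, from which a short calculation gives $x_i - \frac{m_1 c - x_i}{m_1-1} = \frac{m_1}{m_1 - 1}(x_i - c)$. Hence the distance from $x_i$ to its own recomputed centroid is exactly $\frac{m_1}{m_1-1}\|x_i - c\|$, inflated by the factor $\frac{m_1}{m_1-1} > 1$ relative to the distance to the opposite centroid.

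Comparing the two distances then finishes the argument: since $\frac{m_1}{m_1-1}\|x_i - c\| \ge \|x_i - c\|$, the held-out instance is at least as close to the opposite-class prototype as to its own, so the nearest-centroid margin in favour of the correct label is nonpositive. Equality holds precisely when $\|x_i - c\| = 0$, i.e. when $x_i$ sits exactly on the common centroid (the two reduced prototypes then coincide with $x_i$ and it lies on the boundary); otherwise $x_i$ is strictly misclassified. I expect the main obstacle to be conceptual rather than computational: one must keep straight that the centroid collapse is a property of the \emph{full-dataset} projection whereas the classifier is trained on a \emph{reduced} set, so that the effect driving the result is entirely the leave-one-out perturbation of the own-class centroid — the $\frac{m_1}{m_1-1}$ inflation — and the argument should be checked for the edge cases $m_1 = 1$ (where the reduced class-$1$ prototype is undefined) and $x_i = c$.
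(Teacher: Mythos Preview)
Your proposal is correct and follows essentially the same approach as the paper: both arguments hinge on the observation that MP forces the two full-dataset class centroids to coincide, and then analyze how removing $x_i$ shifts its own-class centroid away from it while leaving the opposite-class centroid unchanged. Your version is slightly more explicit---you compute the exact inflation factor $\tfrac{m_1}{m_1-1}$ whereas the paper argues qualitatively that the center of mass shifts away---but the structure and key idea are identical.
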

\begin{proof}
In MP, the vector used to define the projection is equal to the
difference between the class centroids in the original dataset $X, Y$.
This means that in the projected dataset $X_{\mathrm{MP}}$, the
two class centroids $c_0$ and $c_1$ are identical.
%
%
%
For ease of exposition, assume that $y_i=0$ and
that the number of instances in class 0 is $n_0 > 1$.
%
The centroids of the leave-one-out classifier are $c'_0 = \frac{n_0
  c_0 - x_i}{n_0-1}$ and $c'_1 = c_1$.
Now, we have one of two cases. If $x_i$ is identical to $c_0$, then
$c'_0=c_0=c_1$ so this instance is exactly on the classifier's
decision boundary.
Otherwise, the removal of $x_i$ shifts the center of mass of $c'_0$
in the direction away from $x_i$, so $x_i$ is closer to $c'_1$ than to $c'_0$
and the instance is misclassified.
\end{proof}

This shows that the transformed dataset is fundamentally different
from one where the representation and the label are statistically
independent: if that were the case, the probability of
an instance being classified correctly should correspond to the prior
probability of its class.

It should be stressed that the first case (that the instance is
exactly on the decision boundary) happens only in the theoretical case
that the instance coincides exactly with its class centroid. In
reality, the probability of this to occur is small in practice and
we have never observed it experimentally: all LOO cross-validation
accuracies we have seen with MP have been exactly 0.
We imagine that the corner case may occur more frequently in
datasets where many instances are identical.

%
%
%
%
%



\section{Experiments}
\label{sec:experiments}

In the following, we carry out a set of experiments illustrating the
consequences of the observations described in \S\ref{sec:structure}.
%
We focus on INLP here to complement the theoretical analysis
of MP in the previous section.
Tentative experiments indicate that the tendencies are
similar when applying R-LACE, but we do not investigate this
algorithm thoroughly because it is computationally more demanding.

\subsection{Datasets}

We carry out the experiments on synthetic and real-world natural
language datasets.
The synthetic data was used for investigating how INLP behaves when
applied to data that does \emph{not} contain any signal representing
the concept.
For the $X$ variables, we generated instances from a standard isotropic
multivariate Gaussian. The labels $Y$ were balanced.

For experiments using natural data, we used six domains from the sentiment
classification corpus collected by \newcite{blitzer2007}. This
corpus associates each document with a positive or negative polarity
label; the label distribution is balanced. We did not use the domain information. 
%
In the experiments, the representations $X$ were tfidf-weighted bag of
words (BoW) and the output of a BERT model \cite{devlin2019} at the \texttt{[CLS]} token.
%
%
We considered different values of the number $n$ of instances, and set
the number $d$ of features to $2^{10}$ for random and BoW
representations. (The overall picture is similar with other values of
$d$.) For BERT, the number of dimensions is 768.

For all datasets, we applied the INLP algorithm. As described above,
the algorithm iteratively trains a linear classifier, and we used a
$L_2$-regularized logistic regression model for this purpose.
We ran INLP for several iterations and the result after each
iteration will be considered in the experiments.

\subsection{Prediction Accuracy}
\label{ss:cv}

We applied INLP to the datasets and computed 32-fold 
cross-validation accuracy scores for predicting the removed
concept. In all experiments, we used a
$L_2$-regularized logistic regression model ($C=1$) applied to the 
$L_2$-normalized output of the INLP algorithm.
Figure~\ref{fig:cv_acc} shows the accuracies over the INLP iterations
for the BoW and BERT representations. 
We show the results for different sizes $n$ of the dataset.

\begin{figure}[htbp]
\begin{center}
\begin{subfigure}[b]{0.23\textwidth}
\includegraphics[width=0.90\textwidth]{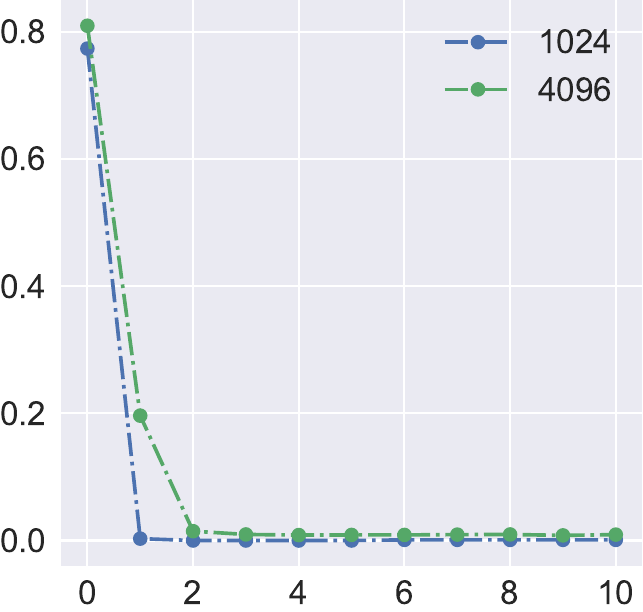}
\caption{Bag of words ($d=2^{10}$).}
\end{subfigure}
\begin{subfigure}[b]{0.23\textwidth}
\includegraphics[width=0.90\textwidth]{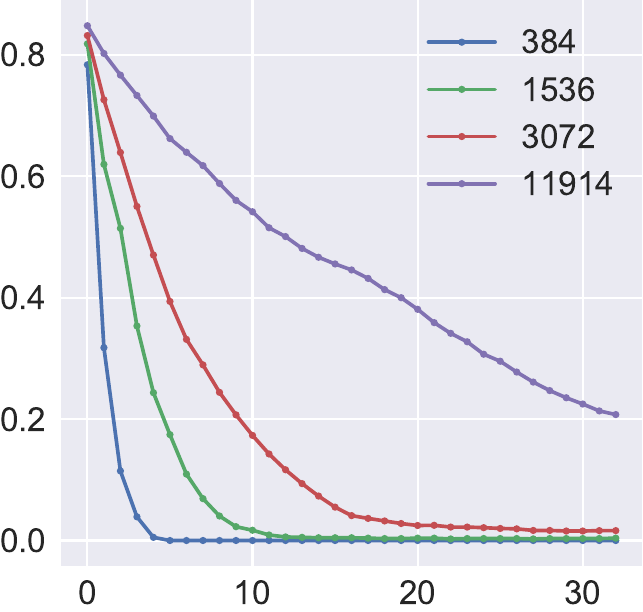}
\caption{BERT ($d=768$).}
\end{subfigure}
\end{center}
\caption{\label{fig:cv_acc}Cross-validated accuracy scores for
predicting the removed concept over INLP iterations. Each curve
corresponds to a size $n$ of the dataset.}
\end{figure}

Clearly, the behavior of the model is different from
what would have been expected if the instances were i.i.d. and $X$
independent of $Y$.
Even after just a few iterations of INLP, accuracies fall far below
the chance level.
This tendency is strongest for the BoW representation, which
falls to zero almost immediately.
For BERT, we see the same overall picture although INLP requires more
iterations, in particular when the dataset grows larger.
Presumably, this is because the information represented by BERT is
more difficult to express using a linear model.

\subsection{Predicted Probabilities}
\label{ss:probs}

To further illustrate the behavior of predictive models trained on
projected representations, we considered how
probabilities predicted by the models are distributed.
Figure~\ref{fig:probs} shows the distributions of predicted
probabilities for the sentiment dataset. 
We show the outputs of a
model trained on the unprocessed BERT representations and on projected
representations (10 iterations of INLP). To compare with a situation where representations
are independent of the labels, we also include
probabilities predicted by a model trained on random labels independent of the text, and we
see a clear difference between the projected and the
independent settings.

\begin{figure}[htbp]
\begin{center}
\begin{subfigure}[b]{0.15\textwidth}
\includegraphics[width=0.99\textwidth]{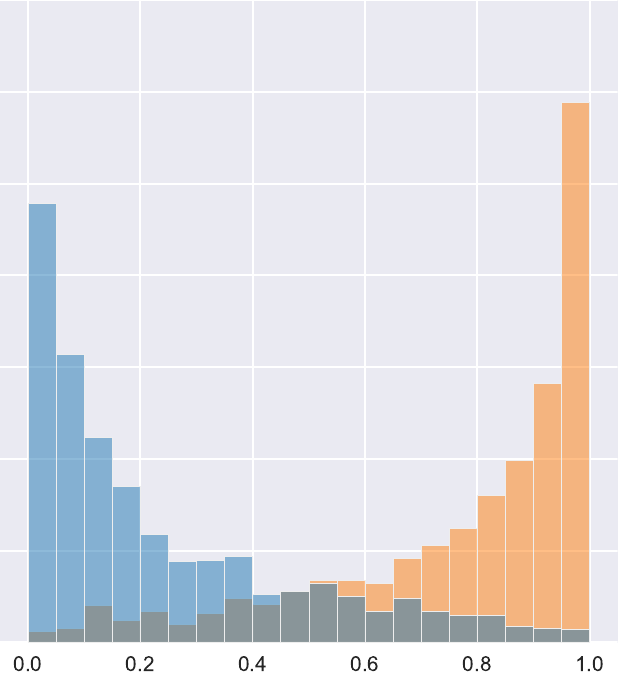}
\caption{Original.}
\end{subfigure}
\begin{subfigure}[b]{0.15\textwidth}
\includegraphics[width=0.99\textwidth]{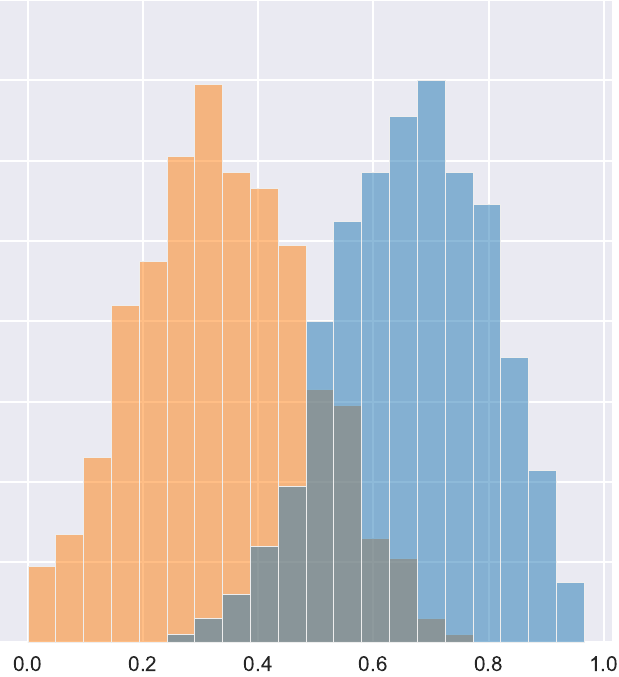}
\caption{Projected.}
\end{subfigure}
\begin{subfigure}[b]{0.15\textwidth}
\includegraphics[width=0.99\textwidth]{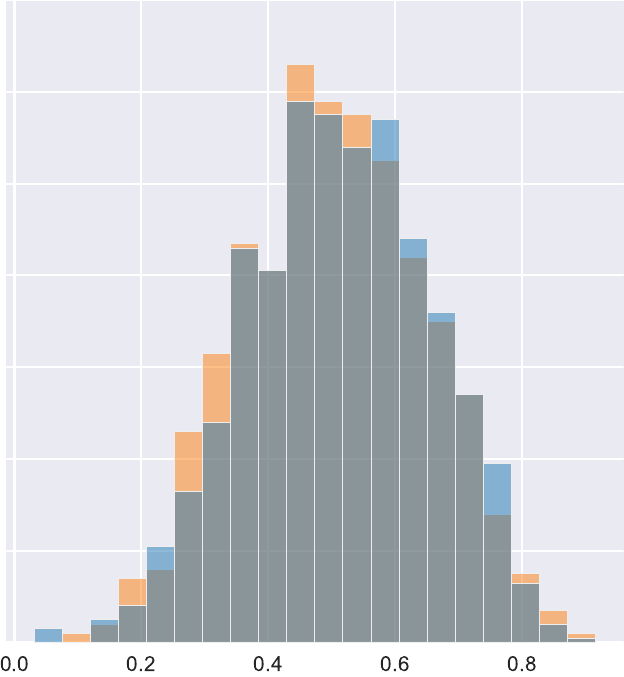}
\caption{Independent.}
\end{subfigure}

\end{center}
\caption{\label{fig:probs}Distribution of predicted probabilities for the positive (orange) and negative classes (blue).}
\end{figure}


\subsection{Neighborhood Structure}
\label{ss:nnstructure}

To investigate the arrangement of instances
in the projected feature space, we carried out an experiment where we
look at how frequently the Euclidean nearest neighbors 
are of the opposite value of the target concept.
Intuitively, one would expect that when $X$ and $Y$ are unrelated and
the instances i.i.d., this
proportion should be around 0.5, while it 
would be expected to be close to 0 if there is a strong
association between $X$ and $Y$.

\begin{figure}[htbp]
\begin{center}

\begin{subfigure}[b]{0.15\textwidth}
\includegraphics[width=0.99\textwidth]{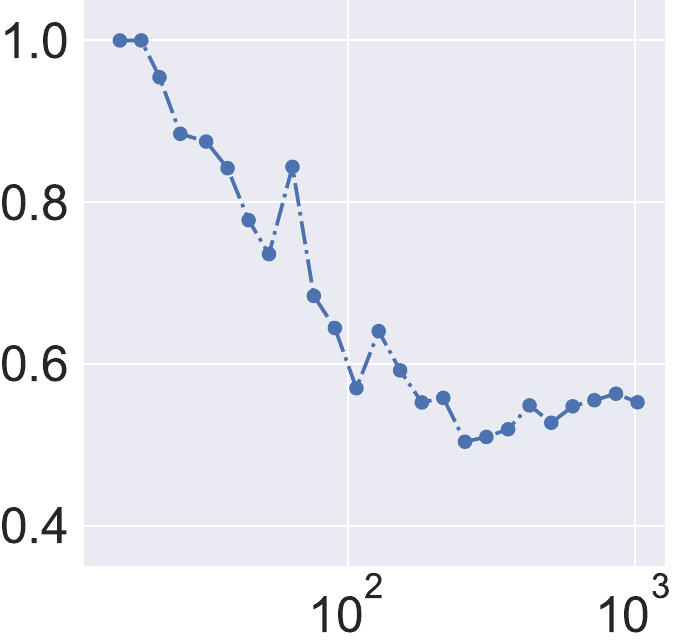}
\caption{Random.}
\end{subfigure}
\begin{subfigure}[b]{0.15\textwidth}
\includegraphics[width=0.99\textwidth]{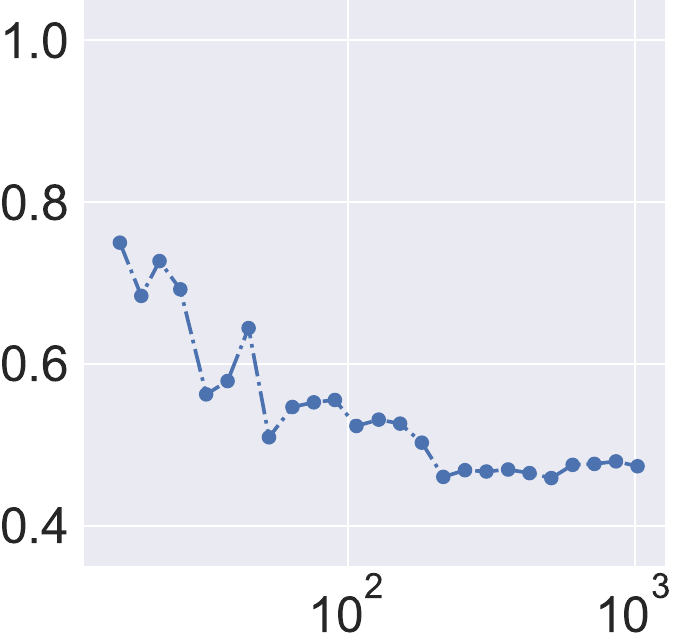}
\caption{Bag of words.}
\end{subfigure}
\begin{subfigure}[b]{0.15\textwidth}
\includegraphics[width=0.99\textwidth]{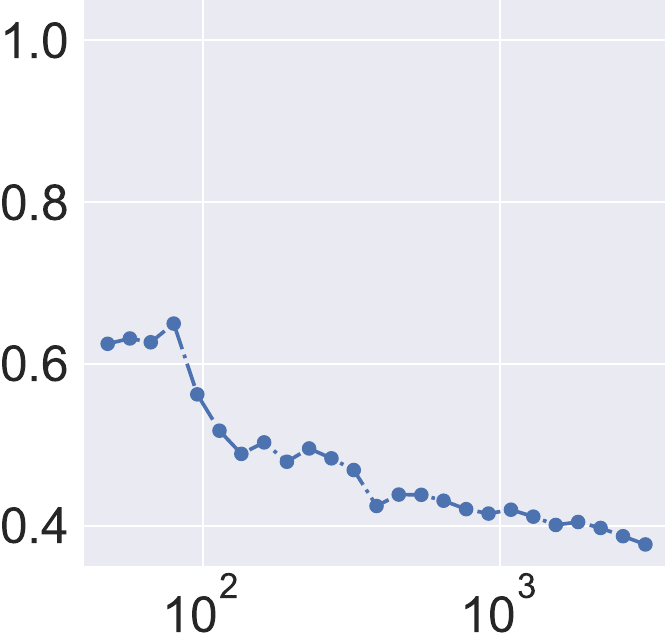}
\caption{BERT.}
\end{subfigure}

\end{center}
\caption{\label{fig:nn_prop}Proportion of instances whose nearest
neighbor is of the opposite label, for different $n$.}
\end{figure}

Figure~\ref{fig:nn_prop} shows these proportions for different data
set sizes $n$, and the tendency to place
instances near those of the opposite label is clearly visible.
This again illustrates the non-i.i.d. distribution of the projected
representations.
This tendency is most pronounced when $d \gg n$.

It is important to note that \emph{group-based} statistical measures that quantify the strength of association between $X$ and $Y$ can be misleading because the
effects discussed here are discernible for \emph{individual} instances.
To illustrate, we computed the MMD 
\cite{gretton2012} of BERT representations between the positive and
negative groups, and we saw that the estimates steadily decrease as we
apply INLP iterations, despite the projected dataset 
becoming \emph{more} informative about the labels.
%


\subsection{Recovering the Original Grouping}
\label{ss:anticlust}

The theoretical result in \S\ref{sec:structure} and the empirical observation from \S\ref{ss:nnstructure} that instances tend to be
located close to instances of the opposite label gives an intuition
for a procedure that recovers the groups defined by the original
labeling.
Intuitively, we can partition the data points into groups selected so
that each instance is maximally dissimilar to the other instances in
the same group.
This reverses the logic of regular clustering models and has been
referred to as \emph{anti-clustering} \cite{spath1986}. For instance,
we can adapt Lloyd's algorithm for $k$-means clustering to the
anti-clustering setup, simply by changing the algorithm to assign an
instance to the cluster it is \emph{least} similar to.

We applied the \texttt{anticlust} R package \cite{papenberg2021} 
using two clusters, the diversity criterion and 100 repetitions of the 
search method by \newcite{brusco2020}. 
The clusters were then compared to the original labels of the datasets.
Figure~\ref{fig:ac_acc} shows the cluster purity scores.

%
\begin{figure}[htbp]
\begin{center}
\begin{subfigure}[b]{0.15\textwidth}
\includegraphics[width=0.95\textwidth]{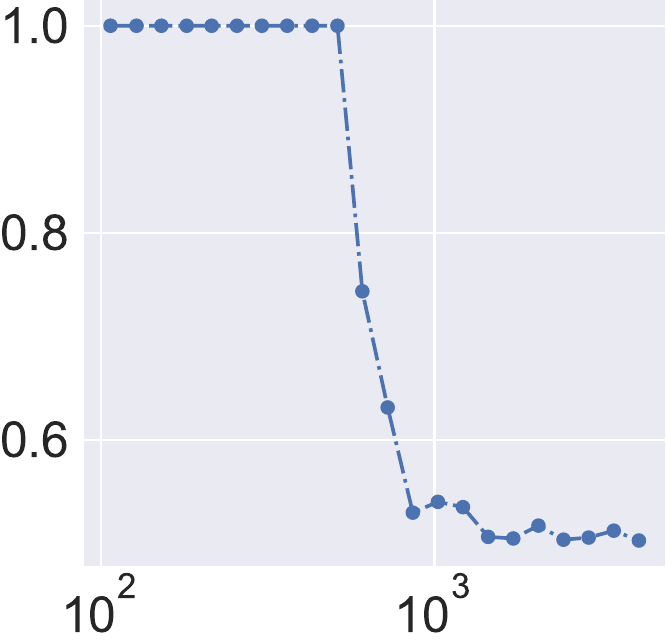}
\caption{Random.}
\end{subfigure}
\begin{subfigure}[b]{0.15\textwidth}
\includegraphics[width=0.95\textwidth]{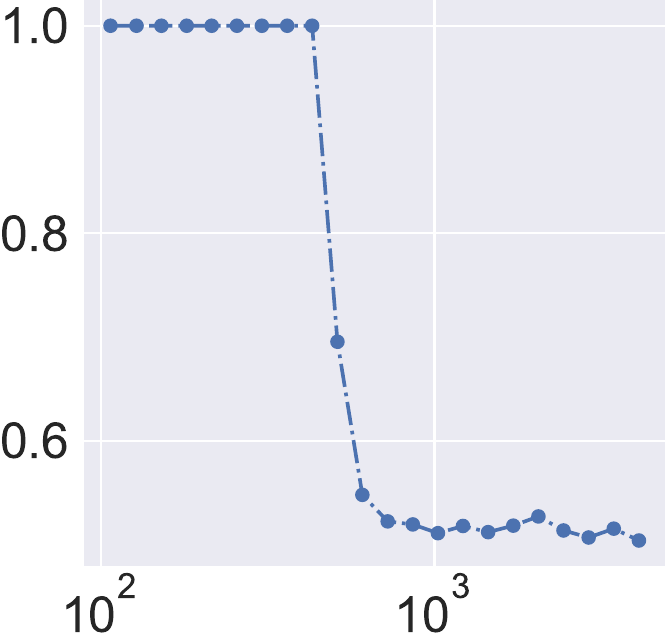}
\caption{Bag of words.}
\end{subfigure}
\begin{subfigure}[b]{0.15\textwidth}
\includegraphics[width=0.95\textwidth]{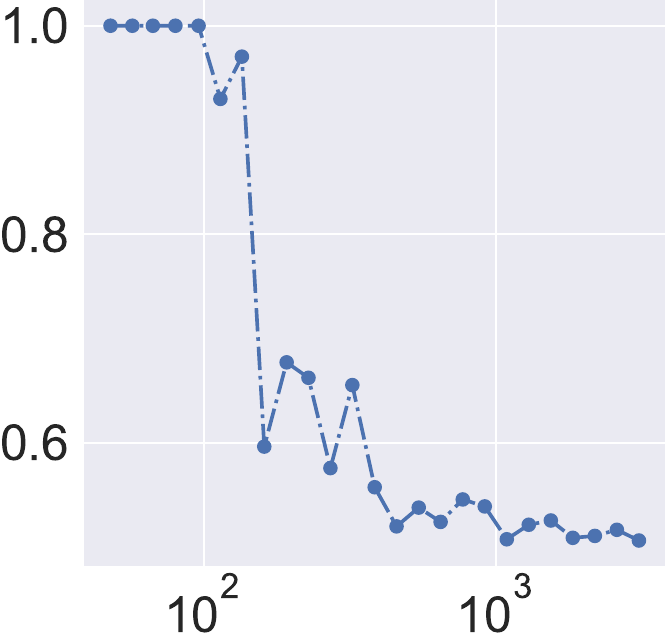}
\caption{BERT.}
\end{subfigure}
\end{center}
\caption{\label{fig:ac_acc}Cluster purity scores comparing the
original labeling to the anti-clustering result.}
\end{figure}

We observe that the anti-clustering algorithm applied to the projected
representations often perfectly reconstructs the grouping defined by
the concept we wanted to remove, in particular when \mbox{$d \gg n$.}
%
%
As we have already argued, projection
inscribes the training labels into the data, and 
a reconstruction is possible even if the
original dataset was 
random and unrelated to the training
labels.

\section{Related Work}

This investigation falls into the category of work that analyzes the
behavior of concept removal methods. Most of the early discussions
focused on the pros and cons of adversarial
methods. For instance, \newcite{elazar2018} claimed that these methods
leak information; their conclusions were later challenged by \newcite{barrett2019}.

The work that is most similar in spirit to ours is arguably the investigation by \newcite{gonen2019}, which analyzed the geometric structure of word
embedding models processed by gender debiasing
methods. They argued that debiasing does not remove the
gender information, but only stores it a less obvious way, and they
showed that this information could be recovered by considering
distances in the processed space.

\section{Implications and Conclusion}




How much does it matter in practice that instances in a projected
dataset are not i.i.d.?
%
Projection-based concept removal methods are
useful for the purpose for which they were originally
developed: transforming a dataset to make sure that a ML \emph{model} trained on
the transformed data does not rely on the target concept.
However, a naive practictioner may get the misguided
impression that the projection ``removes information'' about the concept
from the \emph{dataset} itself, when the opposite is in fact true.
Clearly, one needs to be careful if we want to use projection for the
purpose of scrubbing some signal from a dataset before distributing it.
%
We should also stress that the effects discussed here are not
problematic in case one can afford to set aside a subset of the data
reserved for the purpose of training the projection: the case we focus
on assumes that we want to use the \emph{whole} dataset.


A consequence of the i.i.d violation is that any statistical analysis
requiring strict i.i.d. assumptions is likely to be
invalid if applied to representations computed by a projection-based method.
For instance, text-based causal inference methods \cite{keith2020}
involving the text representation and the removed concept may be
affected if projection is applied: such causal inference
methods typically rely on predicted probabilities or representation
similarity, which as we have seen in \S\ref{ss:probs}
and \S\ref{ss:nnstructure} are strongly affected.
\newcite{daoud2022} and \newcite{gui2023} highlight
the problem for causal inference when the text encodes information
about a variable of interest, and our results suggest that it
could be risky to try to apply projection to remove this undesired
information.
Effects on predictions in cross-validations (\S\ref{ss:cv}) are
visible already in moderately low-dimensional settings.



\section{Limitations}

There are a number of ways in which this work could be put on
firmer ground theoretically.
In \S\ref{sec:structure}, we limited the theoretical analysis to MP (or equivalently, INLP
based on a nearest centroid classifier), and in future work we would
like to find a more general formal justification for why the
adversarial arrangement emerges.
In the empirical section, we would also like to take a more general
approach in the future and investigate additional concept removal
methods, such as more recent projection-based methods as well as
adversarial representation learning methods.

Furthermore, we do not have a clear understanding of the role played by
the dimensionality $d$ in relation to the dataset size
$n$.
The experiments (\S\ref{ss:nnstructure}
and \S\ref{ss:anticlust}) indicate that that such effects play a role, but this
is currently not taken into account in the theoretical analysis.

\section{Ethical Discussion}

Whether the behaviors investigated here matter in practice depend on
the application, and as discussed above, the consequences are likely
to be limited if the only purpose of the processed representations is
for training a model.
In other cases, in particular when the intention is for the projected
dataset to be distributed, the effects may be more problematic.
For instance, if projection is applied to a set of word embeddings in
order to make them invariant to a demographic attribute, we may accidentally
encode information about the attribute into the 
embedding geometry, so that it can later be decoded from 
representations.

Furthermore, the fact that in many cases the original groups can be
reconstructed from the projected data (\S\ref{ss:anticlust}), even if
the original dataset did not encode any information about the target
concept, shows that projection-based methods should not be viewed as
privacy-preserving \cite{coavoux2018}. To be clear, the inventors of
the methods we have considered did not claim that they are intended
to ensure privacy,\footnote{In contrast, \newcite{xu2017} explicitly
considered projection for privacy, but we have not investigated their method.} but again it is important for users to understand
that projection is not equivalent to information removal in a dataset.

\section{Acknowledgements}

The results presented here are a by-product of discussions with Adel Daoud.
This research was supported by the projects \emph{Interpreting and Grounding Pre-trained Representations for NLP} and \emph{Representation Learning for Conversational AI}, both under the Wallenberg  AI,  Autonomous  Systems  and  Software Program (WASP) funded by the  Knut  and  Alice Wallenberg Foundation, and the project \emph{Countering Bias in AI Methods in the Social Sciences}
under the 
Wallenberg AI, Autonomous Systems and Software Program -- Humanity and Society (WASP-HS),
funded by the Marianne and Marcus Wallenberg Foundation and the Marcus and Amalia Wallenberg Foundation.


%
\section{Bibliographical References}\label{reference}

\bibliographystyle{lrec-coling2024-natbib}
\bibliography{paper}

\end{document}